\documentclass[final]{l4dc2026}

\title[Hierarchical End-to-End Taylor Bounds for Complete Neural Network Verification]{Hierarchical End-to-End Taylor Bounds for Complete Neural Network Verification}
\usepackage{times}

\author{%
 \Name{Taha Entesari} \Email{tentesa1@jhu.edu}\\
 \Name{Mahyar Fazlyab} \Email{mahyarfazlyab@jhu.edu}\\
 \addr Johns Hopkins University
}

\usepackage{xcolor}         
\usepackage{amsmath}
\usepackage{mathtools}

\usepackage{algorithm}
\usepackage{algorithmic}

\usepackage{booktabs}       
\usepackage{multirow}
\usepackage{caption}
\usepackage{wrapfig}

\newcommand{\norm}[1]{\left\| #1 \right\|}
\newcommand{\paran}[1]{\left( #1 \right)}
\newcommand{\relu}[1]{\left( #1 \right)_+}
\newcommand{\inverse}[1]{ #1^{-1}}

\newcommand{\RNum}[1]{\uppercase\expandafter{\romannumeral #1\relax}}

\begin{document}

\maketitle

\begin{abstract}%
 Reachability analysis of neural networks, which seeks to compute or bound the set of outputs attainable over a given input domain, is central to certifying safety and robustness in learning-enabled physical systems. Since exact reachable set computation is generally intractable, existing methods typically rely on tractable overapproximations. Examining the state of the art for smooth, twice-differentiable networks, we observe that existing approaches exploit at most second-order information and do not systematically leverage higher-order information. In this work, we introduce \textsc{HiTaB}, a novel verification framework that exploits second-order smoothness through both the Hessian, $\nabla^2 f$, and its Lipschitz constant, $L_{\nabla^2 f}$. We further develop a unified hierarchy of zeroth-, first-, and second-order bounds, together with precise conditions under which higher-order approximations yield provable improvements. Our main technical contribution is a compositional procedure for efficiently bounding $L_{\nabla^2 f}$ in deep neural networks via layerwise propagation of curvature bounds. We extend the framework to both $\ell_2$- and $\ell_\infty$-constrained input sets and show how it can be integrated into branch-and-bound verification pipelines. To our knowledge, this is the first practical reachability analysis framework for smooth neural networks that systematically exploits Lipschitz continuity of curvature, leading to tighter and more informative safety certificates.
\end{abstract}

\begin{keywords}%
  Neural network reachability, Formal verification, Higher-order smoothness, Hessian Lipschitz continuity, Taylor models
\end{keywords}

\section{Introduction}
Ensuring the reliability of neural network-based systems is essential for their deployment in safety-critical applications, including autonomous driving \cite{ibrahum2024deep}, medical diagnosis \cite{javed2024robustness}, and control \cite{tambon2022certify}. A central verification question in this setting is the following: given a neural network and a bounded set of admissible inputs, can we efficiently compute or tightly bound the set of possible outputs? This problem arises in many settings, ranging from certifying robustness against adversarial perturbations \cite{wang2021beta} to verifying the safety of neural network controllers in closed-loop dynamical systems \cite{entesari2023reachlipbnb}. 

In this work, we study this question through the lens of reachability analysis. Given a neural network-represented function $f:\mathbb{R}^n \to \mathbb{R}$ and an input set $\mathcal{X}$, we consider the problem of upper bounding
\[
\max_{x \in \mathcal{X}} f(x).
\]
Such bounds provide certificates on the reachable output range of the network over $\mathcal{X}$ and form a core primitive in safety verification and robustness analysis.

Although this problem is generally nonconvex and computationally intractable, substantial progress has been made in developing tractable overapproximation methods that provide provable upper bounds on the optimal value. Such bounds enable sound, though conservative, certification of neural network behavior and underpin many modern formal verification frameworks \cite{zhang2018efficient, xu2020fast, wang2021beta, entesari2023reachlipbnb}.

Despite the extensive literature on verification and reachability analysis for non-smooth networks, particularly ReLU networks, comparatively less attention has been given to smooth and differentiable architectures. Existing methods specialized to differentiable networks include the branch-and-bound framework GenBaB \cite{shi2025neural} and several approaches that exploit derivative information to obtain certified bounds \cite{zhang2019recurjac, singla2020second, entesari2024compositional, sharifi2024provable}. However, these approaches rely on at most local first- or second-derivative information and do not systematically exploit higher-order smoothness to control approximation error.




\paragraph{Contributions}
In this work, we develop a practical framework for reachability analysis of smooth neural networks that systematically exploits higher-order smoothness. Our main contributions are:

\begin{itemize}
    \item \textbf{A compositional method for bounding the Hessian Lipschitz constant.}
    We develop a novel layerwise procedure for bounding the Lipschitz constant of the Hessian, $L_{\nabla^2 f}$, in scalar-valued feedforward neural networks. This provides a practical way to quantify higher-order smoothness at the network level.

    \item \textbf{An end-to-end second-order Taylor model with certified cubic remainder bounds.}
    We derive an end-to-end second-order Taylor approximation of the network that preserves the local gradient and Hessian of the network, and we bound the approximation error using the Lipschitz continuity of the Hessian. This yields a certified local upper bound whose remainder scales cubically with the input perturbation size.

    \item \textbf{A hierarchy of reachability bounds.}
    We develop a unified hierarchy of zeroth-, first-, and second-order reachability bounds, clarifying when higher-order information leads to tighter certificates.
\end{itemize}

Together, these components define \textsc{HiTaB} (\textbf{Hi}erarchical End-to-End \textbf{Ta}ylor \textbf{B}ounds), a framework that yields tighter reachability bounds for smooth neural networks and can be embedded within complete input space branch-and-bound verification pipelines.

\subsection{Related Work}

\paragraph{Reachability Analysis of Differentiable Networks}

Unlike piecewise linear neural networks, whose complete verification can be formulated using mixed-integer linear programs, smooth neural networks lack such general complete formulations. Consequently, much research has focused on incomplete reachability analysis.

These methods often rely on mathematical abstractions or optimization-based relaxations.
For instance, \cite{hu2020reach} presents a semidefinite program (SDP) approach for networks with slope-bounded activations, covering both piecewise linear and differentiable functions. 
ReachNN \cite{huang2019reachnn} utilizes Bernstein polynomials for Lipschitz networks, while \cite{kochdumper2023open} employs polynomial zonotopes to capture non-convex reachable sets for ReLU, sigmoid, and $\tanh$ functions. 
Other methods are highly specialized, such as \cite{ivanov2019verisig, ivanov2021verisig}, which target sigmoid and $\tanh$ activations specifically.
These methods generally follow a common pattern: abstract individual neurons and layers first, then compose them to obtain the end-to-end map. An alternative is to abstract the end-to-end map directly and then relax it. Along these lines, \cite{sharifi2024provable} recently proposed a framework that leverages the network's gradient information to derive an end-to-end abstraction via a first-order Taylor expansion. Our approach builds directly on this perspective, constructing an end-to-end \emph{third}-order Taylor abstraction that captures both gradient and curvature information.

Efforts to achieve complete verification for smooth networks often build upon these analyses. 
ReachLipBnB \cite{entesari2023reachlipbnb}, for example, wraps a Lipschitz-based analysis in a branching strategy to yield a complete verifier. Similarly, GenBaB \cite{shi2025neural} adapts the CROWN-like \cite{zhang2018efficient} Branch-and-Bound (BaB) framework, using element-wise bounding for smooth activations.


\paragraph{Lipschitz Estimation}
Lipschitz constant estimation for neural networks has been at the center of robustness analysis since the advent of adversarial attacks \cite{szegedy2013intriguing}. 
However, the spectral bound of \cite{szegedy2013intriguing} proved to be highly conservative with limited practical utility. 
Subsequent works focused on obtaining local Lipschitz constants, which can significantly reduce conservatism at the expense of increased computation, either through bound propagation techniques \cite{huang2021training, shi2022efficiently} or more intensive mixed-integer linear programs \cite{jordan2020exactly}. 
Later, \cite{fazlyab2019efficient} introduced LipSDP, an SDP to calculate global and local Lipschitz constants for neural networks with \emph{slope-bounded} activations. 
This framework inspired a wide range of follow-up works, expanding and specializing the LipSDP algorithm to specific architectures \cite{pauli2023lipschitz}, exploiting the structure of layers \cite{araujo2023unified, wang2023direct}, extending the methodology to certain non-slope-restricted activation functions \cite{pauli2024novel}, and providing improved spectral-like bounds \cite{fazlyab2023certified}.\\
Beyond the Lipschitz constant of a neural network, estimating the Lipschitz constant of the gradient (or equivalently, the norm of the Hessian) has also received attention. 
\cite{singla2020second} first addressed this problem and provided an algorithm for computing bounds on the Hessian of scalar logits of a feedforward network. 
\cite{sharifi2024provable} studied local versions of this bound.
\cite{entesari2024compositional} presented a different approach, providing a formulation for calculating the Lipschitz constant of the gradient of general function compositions, with specializations to neural networks. 
Here, we go beyond these and bound the Lipschitz constant of the \textit{second} derivative. 

\section{Background and Problem Statement} \label{sec:backgroundUpperBounding}

We consider a twice-differentiable function $f:\mathbb{R}^n \to \mathbb{R}$ indexed by a neural network and the problem of upper bounding its maximum over a Euclidean ball centered at a nominal point $x_c \in \mathbb{R}^n$:
\begin{align}
    \label{eq:originalProblemFormulation}
    \max_{x \in \mathcal{B}(x_c,\varepsilon)} \; f(x), \quad \text{where} \quad \mathcal{B}(x_c,\varepsilon) \coloneq \{x \in \mathbb{R}^n \mid \|x-x_c\|_2 \le \varepsilon\}.
\end{align}
This problem arises in a variety of applications, including certification against adversarial perturbations in neural network classifiers and reachability analysis of learning-enabled dynamical systems. In general, \eqref{eq:originalProblemFormulation} is nonconvex and computationally intractable to solve exactly. Accordingly, our goal is to derive tractable upper bounds on its optimal value. To this end, we seek a provable pointwise majorizer $\bar f$ such that
\begin{equation}
    \label{eq:generalPointWiseUpperBound}
    f(x_c+\delta) \le \bar f(x_c,\delta),
    \qquad \forall \delta \text{ with } \|\delta\|_2 \le \varepsilon.
\end{equation}
The majorizer $\bar f$ is constructed using local smoothness information at $x_c$, namely the function value $f(x_c)$, gradient $\nabla f(x_c)$, Hessian $\nabla^2 f(x_c)$, and their Lipschitz bounds. Given such a pointwise upper bound, we obtain the induced global certificate
\begin{equation}
    \label{eq:generalPointWiseUpperBoundWithSup}
    f^*(x_c,\varepsilon)
    \coloneq
    \sup_{\|\delta\|_2 \le \varepsilon} f(x_c+\delta)
    \le
    \sup_{\|\delta\|_2 \le \varepsilon} \bar f(x_c,\delta)
    \eqqcolon
    \bar f^*(x_c,\varepsilon).
\end{equation}
For practical verification, we require $\bar f$ to be amenable to optimization over $\mathcal{B}(x_c,\varepsilon)$, ideally yielding a closed-form or efficiently computable upper bound on $\bar f^*(x_c,\varepsilon)$.

Such upper bounds are useful in applications including neural network verification \cite{singla2020second, fazlyab2023certified} and reachable-set overapproximation \cite{entesari2023reachlipbnb, sharifi2024provable}. In this work, we unify prior constructions of $\bar f$, which are typically based on zeroth-, first-, or second-order local models with bounded remainder terms, and strengthen them by incorporating Hessian Lipschitz continuity. This yields tighter certificates for smooth neural networks by controlling the cubic remainder of an end-to-end second-order Taylor model.

\section{Hierarchy of End-to-End Taylor Models}
In this section, we develop a hierarchy of upper bounds for \eqref{eq:originalProblemFormulation} using local information of increasing order. We index the hierarchy by the highest-order derivative information used in the construction, rather than by the degree of the resulting majorizer.

\subsection{Zeroth-Order Information}
The simplest majorizer uses only the local function value and the Lipschitz constant $L_f$. In this case, Lipschitz continuity directly gives
\[
f(x_c+\delta) \le \bar f_0(x_c,\delta) \coloneq f(x_c) + L_f \|\delta\|_2.
\]
Maximizing over the perturbation set $\{\delta : \|\delta\|_2 \le \varepsilon\}$ yields the corresponding worst-case certificate
\[
\sup_{\|\delta\|_2 \le \varepsilon} f(x_c+\delta)
\le
\bar f_0^*(x_c,\varepsilon)
\coloneq
f(x_c) + L_f \varepsilon.
\]
This majorizer only preserves the function value at $x_c$.

\subsection{First-Order Information}

The zeroth-order bound can be improved by incorporating first-order local information, namely the gradient $\nabla f(x_c)$ and the gradient Lipschitz constant $L_{\nabla f}$. Using a first-order Taylor expansion with a quadratic remainder bound gives the pointwise estimate
\[
f(x_c+\delta)
\le
\bar f_1(x_c,\delta)
\coloneq
f(x_c)+\nabla f(x_c)^\top \delta + \frac{1}{2}L_{\nabla f}\|\delta\|_2^2.
\]
This majorizer preserves the function value and gradient at $x_c$. Maximizing the upper bound over $\|\delta\|_2 \le \varepsilon$ yields the closed-form certificate
\[
\sup_{\|\delta\|_2 \le \varepsilon} f(x_c+\delta)
\le
\bar f_1^*(x_c,\varepsilon)
\coloneq
f(x_c)+\|\nabla f(x_c)\|_2\,\varepsilon+\frac{1}{2}L_{\nabla f}\varepsilon^2.
\]
See \cite{entesari2024compositional} for derivations and proofs. Compared to the zeroth-order bound, the first-order bound is tighter whenever
\begin{equation}
    \label{eq:whenFirstIsBetter}
    \varepsilon \le \frac{2(L_f-\|\nabla f(x_c)\|_2)}{L_{\nabla f}}.
\end{equation}
We note that the threshold is always nonnegative since $L_f \geq \sup_{x} \|\nabla f(x)\|_2$. This condition makes explicit when incorporating gradient information is beneficial: the improvement is most pronounced when the local slope, captured by $\|\nabla f(x_c)\|_2$, is significantly smaller than the global Lipschitz constant $L_f$ of $f$, while the local curvature, as quantified by $L_{\nabla f}$, remains moderate.

\subsection{Second-Order Information}
To further tighten the upper bound on $f(x_c+\delta)$, we incorporate second-order local information, namely the Hessian $\nabla^2 f(x_c)$ together with the Hessian Lipschitz constant $L_{\nabla^2 f}$. This yields the pointwise upper bound
\begin{equation}
\label{eq:pointwise2ndOrderBound}
f(x_c+\delta)
\le
\bar f_2(x_c,\delta)
\coloneq
f(x_c)
+ \nabla f(x_c)^\top \delta
+ \frac{1}{2}\delta^\top \nabla^2 f(x_c)\delta
+ \frac{1}{6}L_{\nabla^2 f}\|\delta\|_2^3.
\end{equation}
That is, $\bar f_2$ is an end-to-end second-order Taylor model augmented with a certified cubic remainder term. We provide a formal derivation of \eqref{eq:pointwise2ndOrderBound} in Appendix~\ref{proofOfeq:pointwise2ndOrderBound}. By capturing both local curvature and higher-order smoothness, this bound can yield substantially tighter certificates, particularly in regions where $f$ is well approximated by a quadratic model.\\
In contrast to $\bar f_0$ and $\bar f_1$, the second-order majorizer generally does not admit a closed-form worst-case bound:
$
\bar f_2^*(x_c,\varepsilon)
\coloneq
\sup_{\|\delta\|_2 \le \varepsilon} \bar f_2(x_c,\delta).
$
This optimization is generally intractable because it involves maximizing a nonconvex quadratic term together with a cubic remainder over a Euclidean ball. Nonetheless, as we show next, it is possible to derive informative and computationally tractable upper bounds on $\bar f_2^*$ by exploiting structure in the quadratic form or spectral properties of $\nabla^2 f(x_c)$.

\begin{proposition}[Split and Bound]\label{prop:splitAndBound}
    Let $\bar{f}_2^\text{sb}(x_c, \varepsilon)$ be defined as 
    \[
    \bar{f}_2^\text{sb}(x_c, \varepsilon) = f(x_c) + \norm{\nabla f(x_c)}_2 \varepsilon + \frac{1}{2} \relu{\lambda_{\max} (\nabla^2 f(x_c))} \varepsilon^2 + \frac{1}{6}L_{\nabla^2 f} \varepsilon^3,
    \]
    where $\relu{a}=\max(a,0)$. Then $\bar{f}_2^*(x_c, \varepsilon) \leq \bar{f}_2^\text{sb}(x_c, \varepsilon)$.
\end{proposition}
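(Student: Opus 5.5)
The plan is to bound each term of $\bar f_2(x_c,\delta)$ separately, uniformly over the ball $\|\delta\|_2\le\varepsilon$, and then add the bounds. The supremum of a sum is at most the sum of the suprema, so
\[
\bar f_2^*(x_c,\varepsilon)\le f(x_c)+\sup_{\|\delta\|_2\le\varepsilon}\nabla f(x_c)^\top\delta+\sup_{\|\delta\|_2\le\varepsilon}\tfrac12\delta^\top\nabla^2 f(x_c)\delta+\sup_{\|\delta\|_2\le\varepsilon}\tfrac16 L_{\nabla^2 f}\|\delta\|_2^3 .
\]
This is the ``split'' step. It is valid because each supremum is finite over a compact set. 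Since the maximizers of the separate terms need not coincide, the inequality may be strict.

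Next I bound each supremum in turn.

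\emph{Linear term.} By Cauchy--Schwarz, $\nabla f(x_c)^\top\delta\le\|\nabla f(x_c)\|_2\|\delta\|_2\le\|\nabla f(x_c)\|_2\varepsilon$.

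\emph{Quadratic term.} The Hessian $\nabla^2 f(x_c)$ is symmetric because $f$ is twice differentiable; I will note that symmetry holds whenever the second derivatives are continuous, as is implicit in the existence of $L_{\nabla^2 f}$. The Rayleigh quotient bound then gives $\delta^\top\nabla^2 f(x_c)\delta\le\lambda_{\max}(\nabla^2 f(x_c))\|\delta\|_2^2$. To pass to $\varepsilon^2$, I split into two cases:
\begin{itemize}
\item if $\lambda_{\max}\ge0$, the bound is at most $\lambda_{\max}\varepsilon^2=(\lambda_{\max})_+\varepsilon^2$;
\item if $\lambda_{\max}<0$, the bound is at most $0=(\lambda_{\max})_+\varepsilon^2$.
\end{itemize}
In both cases $\tfrac12\delta^\top\nabla^2 f(x_c)\delta\le\tfrac12(\lambda_{\max})_+\varepsilon^2$.

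\emph{Cubic term.} Since $L_{\nabla^2 f}\ge0$ and $\|\delta\|_2\le\varepsilon$, this term is at most $\tfrac16L_{\nabla^2 f}\varepsilon^3$.

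Summing the four bounds yields exactly $\bar f_2^{\text{sb}}(x_c,\varepsilon)$.

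The argument has no real obstacle. The only point needing care is the positive part in the quadratic term: the monotonicity step $\|\delta\|_2^2\le\varepsilon^2$ goes the wrong way when $\lambda_{\max}<0$. The case split above handles this, and the resulting bound is still valid, though it is loose in that case, since it discards the negative curvature.
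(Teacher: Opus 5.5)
Your proof is correct and is essentially the paper's argument. Like the paper, you split the supremum of $\bar f_2(x_c,\delta)$ over $\|\delta\|_2\le\varepsilon$ into the sum of the suprema of its four terms, then bound the linear term by $\|\nabla f(x_c)\|_2\varepsilon$, the quadratic term by $\tfrac12(\lambda_{\max}(\nabla^2 f(x_c)))_+\varepsilon^2$ (with $\delta=0$ optimal under negative curvature), and the cubic term by $\tfrac16 L_{\nabla^2 f}\varepsilon^3$; your case split and symmetry remark only spell out those steps more carefully.
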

We defer the proof to Appendix \ref{proofOfProp:splitAndBound}.
The closed-form certificate $\bar f_2^{\mathrm{sb}}(x_c,\varepsilon)$ provides a practical upper bound on the second-order worst-case value $\bar f_2^*(x_c,\varepsilon)$. Moreover, it allows us to characterize regimes, analogous to \eqref{eq:whenFirstIsBetter}, in which the bound using second-order information is provably tighter than the corresponding first- and zeroth-order bounds. As expected, this improvement is most pronounced for small perturbation budgets, where the cubic remainder is dominated by lower-order terms. We make this statement precise in the next result and defer the proof to Appendix~\ref{proofOfProp:thirdIsBetter}.

\begin{proposition}
\label{prop:thirdIsBetter}
The split-and-bound certificate $\bar f_2^{\mathrm{sb}}(x_c,\varepsilon)$ satisfies $\bar f_2^{\mathrm{sb}}(x_c,\varepsilon)\le \bar f_1^*(x_c,\varepsilon)$
whenever
\[
\varepsilon \leq \frac{3}{L_{\nabla^2 f}}\left( L_{\nabla f} - \relu{\lambda_{\max} \nabla^2 f(x_c)} \right)
\]
\end{proposition}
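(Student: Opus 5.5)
Since both certificates are available in closed form, the plan is to compare them directly. Subtracting, the common terms $f(x_c)$ and $\norm{\nabla f(x_c)}_2\varepsilon$ cancel, and
\[
\bar f_1^*(x_c,\varepsilon)-\bar f_2^{\mathrm{sb}}(x_c,\varepsilon)
=\frac{\varepsilon^2}{2}\left(L_{\nabla f}-\relu{\lambda_{\max}(\nabla^2 f(x_c))}\right)-\frac{1}{6}L_{\nabla^2 f}\,\varepsilon^3
=\frac{\varepsilon^2}{6}\left(3\left(L_{\nabla f}-\relu{\lambda_{\max}(\nabla^2 f(x_c))}\right)-L_{\nabla^2 f}\,\varepsilon\right).
\]
The claim then reduces to showing that this expression is nonnegative under the stated condition on $\varepsilon$.

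Because $\varepsilon^2/6\ge 0$, it suffices to prove $L_{\nabla^2 f}\,\varepsilon\le 3\left(L_{\nabla f}-\relu{\lambda_{\max}(\nabla^2 f(x_c))}\right)$. When $L_{\nabla^2 f}>0$, this is exactly the hypothesis after multiplying both sides by $L_{\nabla^2 f}$. I would treat the degenerate case $L_{\nabla^2 f}=0$ separately. There the right-hand side of the hypothesis is read as $+\infty$, or the condition is vacuous, so I need the bracket to be nonnegative independently of the hypothesis.

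The one substantive point, and the step needing care, is therefore $L_{\nabla f}\ge \relu{\lambda_{\max}(\nabla^2 f(x_c))}$. I would argue it from the fact that $L_{\nabla f}$ is a Lipschitz constant of $\nabla f$. For a twice-differentiable $f$ this gives $\norm{\nabla^2 f(x)}_2\le L_{\nabla f}$ for all $x$: take the directional limit of $\norm{\nabla f(x+tv)-\nabla f(x)}_2/t$. Hence
\[
\lambda_{\max}(\nabla^2 f(x_c))\le \norm{\nabla^2 f(x_c)}_2\le L_{\nabla f},
\]
and $0\le L_{\nabla f}$ trivially, so the bracket is nonnegative. This mirrors the remark after \eqref{eq:whenFirstIsBetter} that $L_f\ge\sup_x\norm{\nabla f(x)}_2$. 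It also shows that the threshold in the proposition is nonnegative, so the condition is never void.

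Combining the steps gives $\bar f_2^{\mathrm{sb}}(x_c,\varepsilon)\le\bar f_1^*(x_c,\varepsilon)$. I expect no genuine obstacle beyond stating the case $L_{\nabla^2 f}=0$ and the spectral-norm bound cleanly. The rest is the algebra above.
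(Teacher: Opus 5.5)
Your proposal is correct and follows essentially the same route as the paper. The paper likewise subtracts the two closed-form certificates, cancels $f(x_c)$ and $\|\nabla f(x_c)\|_2\varepsilon$, and reduces the claim to $\frac{1}{2}(\lambda_{\max}(\nabla^2 f(x_c)))_+ + \frac{1}{6}L_{\nabla^2 f}\varepsilon \le \frac{1}{2}L_{\nabla f}$. Your two additions — factoring out $\varepsilon^2/6$ instead of dividing by $\varepsilon^2$, and treating $L_{\nabla^2 f}=0$ via $\lambda_{\max}(\nabla^2 f(x_c))\le\|\nabla^2 f(x_c)\|_2\le L_{\nabla f}$ — are small refinements that make explicit edge cases the paper's proof leaves implicit.
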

Note that the threshold in Proposition~\ref{prop:thirdIsBetter} is always nonnegative, since $L_{\nabla f} \ge \sup_x \lambda_{\max}(\nabla^2 f(x))$. 

Moreover, once $\bar f_2^{\mathrm{sb}}(x_c,\varepsilon)$ has been computed, the corresponding first- and zeroth-order certificates, $\bar f_1^*(x_c,\varepsilon)$ and $\bar f_0^*(x_c,\varepsilon)$, can be obtained with essentially no additional overhead, since all required quantities are already available. This observation allows us to combine the hierarchy into a single certificate that is guaranteed to be at least as tight as each individual bound:
\[
\bar f_m(x_c,\varepsilon)
\coloneq
\min\bigl\{
\bar f_0^*(x_c,\varepsilon),\,
\bar f_1^*(x_c,\varepsilon),\,
\bar f_2^{\mathrm{sb}}(x_c,\varepsilon)
\bigr\}.
\]
By construction, $\bar f_m(x_c,\varepsilon)$ dominates all three certificates and therefore never performs worse than any of the individual methods.

\begin{remark} \normalfont
Proposition~\ref{prop:splitAndBound} yields a practical closed-form upper bound on $\bar{f}_2^*(x_c,\varepsilon)$. In principle, one can obtain less conservative bounds by replacing the split-and-bound relaxation with tighter numerical procedures, which typically require solving a nonconvex optimization problem or a system of nonlinear equations. In our experiments, however, the resulting gains were modest relative to the additional computational burden. For this reason, we adopt the closed-form certificate here and leave the design of tighter numerical relaxations to future work.
\end{remark}
To make the proposed bound operational, it remains to estimate the Hessian Lipschitz constant $L_{\nabla^2 f}$. We develop this construction in Section~\ref{sec:lipschitzOfHessianEstimation}. Before turning to that problem, however, we derive the analogous upper bounds for the case in which the input region is an $\ell_\infty$ ball.

\subsection{$\ell_\infty$-Bounded Inputs}

As established in \cite{entesari2024compositional}, one can derive $\ell_\infty$ counterparts of the upper bounds $\bar f_i$. Doing so directly, however, requires Lipschitz constants with respect to the $\ell_\infty$ norm. Since the methods developed later in Section~\ref{sec:lipschitzOfHessianEstimation} are primarily geared toward estimating $\ell_2$ Lipschitz constants, we reuse the previously established pointwise upper bounds $\bar f_i$ and convert them into valid certificates over $\ell_\infty$ input balls via norm relations.


We consider generalized $\ell_\infty$-bounded perturbations and study the problem
\begin{equation}
    \label{eq:inftyBoundedProblem}
    \max_{\delta} \; f(x_c+\delta)
    \qquad
    \text{subject to}
    \qquad
    \|D^{-1}\delta\|_\infty \le 1,
\end{equation}
where $D$ is a diagonal positive definite matrix. The feasible set is an axis-aligned box, and the standard $\ell_\infty$ perturbation model is recovered by taking $D=\varepsilon I$. Analogously to the $\ell_2$-bounded case in \eqref{eq:generalPointWiseUpperBoundWithSup}, we define
\begin{equation}
    f^*(x_c,D)
    \coloneq
    \sup_{\|D^{-1}\delta\|_\infty \le 1} f(x_c+\delta)
    \le
    \sup_{\|D^{-1}\delta\|_\infty \le 1} \bar f(x_c,\delta)
    \eqqcolon
    \bar f^*(x_c,D).
\end{equation}
As established in \cite{entesari2023reachlipbnb}, the zeroth-order certificate takes the form
$
\bar f_0^*(x_c,D)=f(x_c)+L_f\|D\|_F,
$
where $\|\cdot\|_F$ denotes the Frobenius norm. Moreover, \cite{sharifi2024provable} shows that the first-order certificate is given by
$
\bar f_1^*(x_c,D)
=
f(x_c)+\|D\nabla f(x_c)\|_1+\frac{L_{\nabla f}}{2}\|D\|_F^2.
$
We next derive a counterpart of the second-order certificate for generalized $\ell_\infty$ perturbations using the same split-and-bound idea. The proof is deferred to Appendix~\ref{sec:proofOfSecondOrderUpperBoundInfinity}.
\begin{proposition}\label{prop:secondOrderUpperBoundInfinity}
    Let $\bar{f}_2^\text{sb}(x_c, D)$ be defined as 
    \[
    \bar{f}_2^\text{sb}(x_c, D) =  f(x_c) 
             +
              \|D \nabla f(x_c)\|_1
             +
             \frac{1}{2}n \paran{\lambda_{\max} \paran{D\nabla^2 f(x_c) D}}_+
             +
             \frac{L_{\nabla^2 f}}{6} \norm{D}_F^3,
    \]
    where $\relu{a}=\max(a,0)$. Then $\bar{f}_2^*(x_c, D) \leq \bar{f}_2^\text{sb}(x_c, D)$.
\end{proposition}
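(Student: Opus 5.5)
The plan is to bound $\bar f_2^*(x_c,D)=\sup_{\|D^{-1}\delta\|_\infty\le 1}\bar f_2(x_c,\delta)$ term by term, as in the split-and-bound argument of Proposition~\ref{prop:splitAndBound}. The sup of a sum is at most the sum of the sups of its four terms, each taken separately over the box. The natural change of variables is $\delta = Du$ with $\|u\|_\infty\le 1$, which turns the box into the unit hypercube. Under this substitution:
\[
\bar f_2(x_c,Du)= f(x_c) + (D\nabla f(x_c))^\top u + \tfrac12 u^\top \bigl(D\nabla^2 f(x_c) D\bigr) u + \tfrac16 L_{\nabla^2 f}\|Du\|_2^3 .
\]

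\textbf{Linear term.} By H\"older's inequality, $(D\nabla f(x_c))^\top u \le \|D\nabla f(x_c)\|_1\|u\|_\infty \le \|D\nabla f(x_c)\|_1$. Equality is attained at $u=\mathrm{sign}(D\nabla f(x_c))$, so this piece is tight.

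\textbf{Quadratic term.} Write $M = D\nabla^2 f(x_c)D$, which is symmetric. The Rayleigh quotient gives $u^\top M u \le \lambda_{\max}(M)\|u\|_2^2$. If $\lambda_{\max}(M)\ge 0$, then $\|u\|_2^2\le n\|u\|_\infty^2\le n$ yields the bound $n\,\lambda_{\max}(M)$. If $\lambda_{\max}(M)<0$, the term is nonpositive and is bounded by $0$. Together these give $\tfrac12 u^\top M u \le \tfrac12 n\,\bigl(\lambda_{\max}(M)\bigr)_+$. The $(\cdot)_+$ is exactly what handles the sign issue: we cannot multiply a negative eigenvalue by the upper bound $n$ on $\|u\|_2^2$.

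\textbf{Cubic term.} Since $D$ is diagonal with positive entries $d_i$, we have $\|Du\|_2^2=\sum_i d_i^2u_i^2\le \sum_i d_i^2=\|D\|_F^2$. Hence $\tfrac16 L_{\nabla^2 f}\|\delta\|_2^3\le \tfrac16 L_{\nabla^2 f}\|D\|_F^3$, using $L_{\nabla^2 f}\ge 0$ and monotonicity of $t\mapsto t^{3}$ on $t\ge 0$.

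Summing the three bounds with $f(x_c)$ gives $\bar f_2^*(x_c,D)\le \bar f_2^{\mathrm{sb}}(x_c,D)$. Combined with the pointwise majorization \eqref{eq:pointwise2ndOrderBound}, which holds for every $\delta$ and in particular on the box, this also bounds $f^*(x_c,D)$.

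None of the steps is deep. The only point needing care is the quadratic term, where the sign case split and the factor $n$ from comparing $\ell_2$ and $\ell_\infty$ norms of $u$ must be handled correctly. I would also note that the bounds are individually attained at different points, which is why the result is an upper bound rather than an equality.
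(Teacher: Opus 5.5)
Your proof is correct and matches the paper's argument step for step: the substitution $\delta = Du$, the dual-norm (H\"older) bound for the linear term, the relaxation $\|u\|_2^2 \le n$ combined with the clipped top eigenvalue of $M = D\nabla^2 f(x_c)D$ for the quadratic term, and $\|Du\|_2 \le \|D\|_F$ for the cubic term. One small correction to your closing aside: when $\lambda_{\max}(M) > 0$, the quadratic bound $\tfrac12 n\,\lambda_{\max}(M)$ is generally not attained on the box (only if some vector in $\{\pm 1\}^n$ is a top eigenvector of $M$), so the $\ell_\infty$-to-$\ell_2$ relaxation is a second source of slack besides splitting the supremum.
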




\section{Efficient Estimation of the Lipschitz Constant of the Hessian}
\label{sec:lipschitzOfHessianEstimation}

To make the second-order certificate practical, it remains to compute a tractable upper bound on the Hessian Lipschitz constant $L_{\nabla^2 f}$. In this section, we develop such a bound for smooth fully connected neural networks. Specifically, we consider an $L$-layer twice-differentiable network $x \mapsto z^L(x)$ with input dimension $N_0$ and output dimension $N_L$, and consider scalar outputs of the form 
$f(x)=c^\top z^L(x).$
For an input $x \in \mathbb{R}^{N_0}$, let $z^I(x) \in \mathbb{R}^{N_I}$ and $a^I(x) \in \mathbb{R}^{N_I}$ denote the pre-activation and post-activation vectors at layer $I$, respectively, with $a^0(x)=x$. When no confusion arises, we omit the explicit dependence on $x$. The network evolves according to 
\[
z^I = W^I a^{I-1} + b^I,
\qquad
a^I = \sigma(z^I),
\]
where $W^I \in \mathbb{R}^{N_I \times N_{I-1}}$ and $b^I \in \mathbb{R}^{N_I}$ are the weight matrix and bias vector of layer $I$, and $\sigma$ is an elementwise twice-differentiable activation function. We assume that $\sigma$, $\sigma'$, and $\sigma''$ are Lipschitz continuous, with Lipschitz constants $L_{\sigma}$, $L_{\sigma'}$, and $L_{\sigma''}$.

It has been established that for this architecture, $z^L$ and $\nabla z^L$ are Lipschitz continuous and one can find certified Lipschitz constants $L_{z^L}$ and $L_{\nabla z^L}$ for them, respectively, \cite{entesari2024compositional, singla2020second}. 
Here, we establish an algorithm to acquire a bound on $L_{\nabla^2 z^L}$. 
To the best of our knowledge, this is the first work to establish a computationally tractable algorithm to provide upper bounds on the Lipschitz constant of the Hessian of a neural network.

To derive a tractable bound on the Hessian Lipschitz constant, we exploit the compositional structure of the network. In particular, we first establish a general composition rule for the Lipschitz constant of the Hessian, and then specialize it to feedforward neural networks. For each layer $I$, define $F^I:\mathbb{R}^{N_{I-1}}\to\mathbb{R}^{N_I}$ by $F^I(x)=\sigma(W^I x+b^I)$, and for the final layer let $F^L(x)=W^Lx+b^L$. Then $a^I(x)=F^I\circ a^{I-1}(x)$. Let $F_j^I$ denote the $j$th coordinate of $F^I$. By the chain rule, $D(F_j^I\circ a^{I-1})(x)=\nabla F_j^I(a^{I-1}(x))^\top D a^{I-1}(x)$, and
\[
D^2(F_j^I\circ a^{I-1})(x)
=
D a^{I-1}(x)^\top \nabla^2 F_j^I(a^{I-1}(x)) D a^{I-1}(x)
+
\sum_{i=1}^{N_{I-1}}
\frac{\partial F_j^I}{\partial x_i}\bigl(a^{I-1}(x)\bigr)\,\nabla^2 a_i^{I-1}(x),
\]
where $D$ denotes the differential operator. 
The following theorem establishes our main compositional bound; we defer the proof to Appendix \ref{sec:proofOfLipschitzOfCurvatureCompositional}.
\begin{theorem} \label{thm:lipschitzOfCurvatureCompositional}
    Define $L_{\nabla^2 a_j^I}$ as follows
    %
    \begin{align*}
L_{\nabla^2 a_j^I}
&\coloneq
L_{\nabla^2 F_j^I}\,L_{a^{I-1}}^3
+
2\,L_{D a^{I-1}}\,L_{a^{I-1}}\,L_{\nabla F_j^I} \\
&\quad
+
\sum_{i=1}^{N_{I-1}}
\left(
L_{\partial_i F_j^I}\,L_{a^{I-1}}\,L_{D a_i^{I-1}}
+
L_{\nabla^2 a_i^{I-1}}
\sup_x \bigl|\partial_i F_j^I(x)\bigr|
\right),
\end{align*}
where $\partial_i F_j^I(x) \coloneq \frac{\partial F_j^I(x)}{\partial x_i}$. Then $L_{\nabla^2 a_j^I}$ is a Lipschitz constant of the Hessian of the map $x \mapsto a_j^I(x)$.
\end{theorem}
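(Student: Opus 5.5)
Starting from the chain-rule expression for $D^2(F_j^I\circ a^{I-1})(x)$ displayed just before the theorem, I will bound the Lipschitz constant of each of its two summands separately and add the bounds. Fix $x,y$ and write $A(x)=Da^{I-1}(x)$, $H(x)=\nabla^2F_j^I(a^{I-1}(x))$, $g_i(x)=\partial_iF_j^I(a^{I-1}(x))$. The difference of Hessians at $x$ and $y$ is then a difference of a triple product plus a sum of products. Each such difference will be handled by the standard telescoping (add-and-subtract) identity. I will use the operator norm throughout, together with the facts that $\|A(x)\|\le L_{a^{I-1}}$ and $\|\nabla^2 a_i^{I-1}(x)\|\le$ (its Lipschitz bound is not needed for sup). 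Note that a Lipschitz constant of a map bounds the norm of its derivative.

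For the first summand, $A^\top H A$, I will telescope
\[
A(x)^\top H(x)A(x)-A(y)^\top H(y)A(y)=(A(x)-A(y))^\top H(x)A(x)+A(y)^\top(H(x)-H(y))A(x)+A(y)^\top H(y)(A(x)-A(y)).
\]
The middle term is bounded by $L_{a^{I-1}}^2\cdot L_{\nabla^2F_j^I}L_{a^{I-1}}\|x-y\|$, because $H$ is a composition of $\nabla^2 F_j^I$ with $a^{I-1}$. This gives the $L_{\nabla^2F_j^I}L_{a^{I-1}}^3$ term. Each outer term is bounded by $L_{Da^{I-1}}\|x-y\|\cdot\sup\|H\|\cdot L_{a^{I-1}}$, and I will use $\sup\|\nabla^2F_j^I\|\le L_{\nabla F_j^I}$. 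Together the two outer terms give $2L_{Da^{I-1}}L_{a^{I-1}}L_{\nabla F_j^I}$.

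For the second summand I will treat each $i$ separately:
\[
g_i(x)\nabla^2a_i^{I-1}(x)-g_i(y)\nabla^2a_i^{I-1}(y)=(g_i(x)-g_i(y))\nabla^2a_i^{I-1}(x)+g_i(y)\bigl(\nabla^2a_i^{I-1}(x)-\nabla^2a_i^{I-1}(y)\bigr).
\]
The second piece is bounded by $\sup|\partial_iF_j^I|\,L_{\nabla^2a_i^{I-1}}\|x-y\|$. For the first piece, $|g_i(x)-g_i(y)|\le L_{\partial_iF_j^I}L_{a^{I-1}}\|x-y\|$, and $\|\nabla^2 a_i^{I-1}(x)\|\le L_{Da_i^{I-1}}$ because $\nabla^2 a_i^{I-1}$ is the derivative of $Da_i^{I-1}$. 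Multiplying these gives the remaining term, and summing over $i$ with the triangle inequality completes the sum.

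The main obstacle is being careful about which norms the constants refer to. For example, $L_{\nabla F_j^I}$ must dominate the spectral norm of $\nabla^2F_j^I$ everywhere, and $L_{Da^{I-1}}$ must be the constant for the Jacobian in induced operator norm, so that it bounds $\|A(x)-A(y)\|$. I also need to justify the replacement "Lipschitz constant $\ge$ sup of derivative norm" for the $C^2$ maps involved. I will state these conventions explicitly at the start. The rest is bookkeeping with the triangle inequality and submultiplicativity.
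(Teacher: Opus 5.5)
Your proposal is correct and follows the paper's proof essentially step for step. You split the chain-rule Hessian into the triple product $A^\top H A$ and the weighted sum, and use a three-term add-and-subtract for the triple product. The paper orders this telescoping slightly differently, factoring out $A(x)^\top$ first and then splitting $H(x)A(x)-H(y)A(y)$, but it reaches the identical bound $L_{\nabla^2 F_j^I}L_{a^{I-1}}^3+2L_{Da^{I-1}}L_{a^{I-1}}L_{\nabla F_j^I}$. For each $g_i\nabla^2 a_i^{I-1}$ term you use the same two-term split with $\|\nabla^2 a_i^{I-1}\|_2\le L_{Da_i^{I-1}}$ and $\sup|\partial_i F_j^I|$. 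The only blemish is the unfinished clause ``$\|\nabla^2 a_i^{I-1}(x)\|\le$'' in your setup, which you resolve correctly later in the argument.
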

Theorem~\ref{thm:lipschitzOfCurvatureCompositional} yields a modular procedure for bounding the Hessian Lipschitz constant of each layer. Moreover, all quantities appearing in the theorem can be computed efficiently using existing tools. In particular, Lipschitz constants of the activations $a^I$ can be bounded using \cite{fazlyab2023certified, fazlyab2019efficient}, while Lipschitz constants of their gradients, namely $L_{Da^I}$ and $L_{Da_i^I}$, can be obtained using \cite{entesari2024compositional}. Moreoever, for the layer map $F^I(x)=\sigma(W^I x+b^I)$, we have $DF^I(x)=\mathrm{Diag}(\sigma'(W^I x+b^I))W^I$. Consequently,

\begin{minipage}{.48\textwidth}
\begin{equation}
\label{eq:supOfDerivative}
\sup_x \left|\frac{\partial F_j^I(x)}{\partial x_i}\right|
=
L_\sigma |W^I_{ji}|
\end{equation}
\end{minipage}
\hfill
\begin{minipage}{.48\textwidth}
\begin{equation}
\label{eq:lipschitzOfPartialDerivative}
L_{\partial_i F_j^I}
=
L_{\sigma'} \|W^I_{j,:}\|_2 |W^I_{ji}|
\end{equation}
\end{minipage}

\begin{minipage}{.48\textwidth}
\begin{equation}
\label{eq:lipschitzOfGradient}
L_{\nabla F_j^I}
=
L_{\sigma'} \|W^I_{j,:}\|_2^2
\end{equation}
\end{minipage}
\hfill
\begin{minipage}{.48\textwidth}
\begin{equation}
\label{eq:lipschitzOfCurvature}
L_{\nabla^2 F_j^I}
=
L_{\sigma''} \|W^I_{j,:}\|_2^3
\end{equation}
\end{minipage}
where $\partial_i F_j^I \coloneq \frac{\partial F_j^I}{\partial x_i}$. The derivation of \eqref{eq:supOfDerivative}--\eqref{eq:lipschitzOfCurvature} is deferred to Appendix~\ref{sec:appendixIndividualLipschitzTerms}. Together, these expressions allow the quantities $L_{\nabla^2 a_i^I}$ to be computed sequentially across layers, ultimately yielding bounds on $L_{\nabla^2 z_j^L}$.

To bound $L_{\nabla^2 f}$ for $f(x)=c^\top z^L(x)$, one could directly replace the final affine layer $F^L(x)=W^Lx+b^L$ by $\hat F^L(x)=c^\top W^Lx+c^\top b^L$ and apply the same recursion. However, a tighter bound can be obtained by combining the final two layers and instead considering
\[
\hat F(x)=c^\top W^L \sigma(W^{L-1}x+b^{L-1})+c^\top b^L.
\]
Given all quantities up to layer $L-2$, we then apply the same compositional bound to $\hat F$. Defining
$A \coloneq (W^{L-1})^\top \mathrm{Diag}(c^\top W^L)$, 
we obtain

\begin{minipage}{.48\textwidth}
\begin{equation*}
\sup_x \left|\frac{\partial \hat F(x)}{\partial x_i}\right|
=
L_\sigma \sum_j |A_{ij}|
\end{equation*}
\end{minipage}
\hfill
\begin{minipage}{.48\textwidth}
\begin{equation*}
L_{\partial_i \hat F}
=
L_{\sigma'} \sum_j |A_{ij}| \|W^{L-1}_{j,:}\|_2
\end{equation*}
\end{minipage}
and
$
L_{\nabla \hat F}
=
L_{\sigma'} \|A\|_2 \|W^{L-1}\|_2.
$
For $L_{\nabla^2 \hat F}$, we use Theorem 3.8 of \cite{entesari2024compositional}, which yields a tighter estimate than the naive bound
$
L_{\nabla^2 \hat F}
=
L_{\sigma''}\|A\|_2 \|W^{L-1}\|_2 \max_i \|W^{L-1}_{i,:}\|_2.
$
The derivation of these bounds is deferred to Appendix~\ref{sec:appendixIndividualLipschitzTermsForSecondFunction}.

%
\section{Integration of Components: The Final Algorithm}

We now combine the ingredients developed in the previous sections into a practical algorithm for the generalized $\ell_\infty$-bounded problem \eqref{eq:inftyBoundedProblem}. Given the Lipschitz constants computed as in Section~\ref{sec:lipschitzOfHessianEstimation}, we first apply the upper-bounding framework of Section~\ref{sec:backgroundUpperBounding} to construct the certificate $\bar f_m(x_c,D)$. 

When this bound is not sufficiently tight for the application at hand, we further embed the method within a branch-and-bound (BaB) framework. Specifically, we adopt the BaB procedure of \cite{entesari2023reachlipbnb}, which iteratively partitions the input domain to reduce the relaxation gap until a prescribed tolerance is met. Our top-level procedure is summarized in Algorithm~\ref{alg:branchAndBound}. 

\begin{algorithm}[h!]
\caption{Hierarchical End-to-End Taylor Bounds} 

\begin{algorithmic} 
\STATE \textbf{Input:} Desired point $x_c$, input box $D_0$, function $f(\cdot)$, gradient calculator $\nabla f(\cdot)$, vector-hessian calculator $h(x, v) \coloneq \nabla^2 f(x)v$, Stateful branching algorithm $\mathcal{B}$, tolerance $\varepsilon_\text{tol}$
\STATE \textbf{Output:} Upper Bound on $\max_{\norm{\inverse{D_0}\delta}_\infty \leq 1} f(x_c + \delta)$.
\STATE \textbf{Initialize:} $x \gets x_c, D \gets D_0$, Gap $\gets \infty$, Initialize $\mathcal{B}(f, x_c, D_0)$
\STATE $(L_f, L_{a^I}) \gets $  \cite{fazlyab2023certified} $(f)$
\STATE $(L_{\nabla f}, L_{Da^I}, L_{Da^I_i}) \gets $  \cite{entesari2024compositional} $(f, L_{a^I})$
\STATE $(L_{\nabla^2 f}, L_{\nabla^2 a^I_j}) \gets$ Theorem \ref{thm:lipschitzOfCurvatureCompositional} $(f, L_{a^I}, L_{Da^I}, L_{Da^I_i})$

\WHILE {Gap $> \varepsilon_\text{tol}$} 
\STATE $\bar{f}_0^* \gets f(x) + L_f \norm{D}_F \qquad$ and $\qquad \bar{f}_1^* \gets f(x) + \|D\nabla f(x)\|_1 + \frac{L_{\nabla f}}{2} \norm{D}_F^2$
\STATE $\lambda \gets \lambda_{\max}(D\nabla^2 f(x) D)$
\STATE $\bar{f}_2^\text{sb} \gets  f(x) 
             +
              \|D \nabla f(x)\|_1
             +
             \frac{1}{2}n \paran{\lambda}_+
             +
             \frac{L_{\nabla^2 f}}{6} \norm{D}_F^3$
\STATE $\bar{f}_m \gets \min\{\bar{f}_0^*, \bar{f}_1^*, \bar{f}_2^\text{sb}\}$
\STATE $(x, D, Gap) \gets \mathcal{B}(x, D, \bar{f}_m)$. \COMMENT{Given the current upper bound, the branching algorithm will provide the problem that needs to be solved at the next step.}
\ENDWHILE

\STATE \textbf{Return:} $\bar{f}_m$
\end{algorithmic}
\label{alg:branchAndBound}
\end{algorithm}

The remaining ingredient in the proposed algorithm is the computation of $\lambda_{\max}\!\bigl(D\nabla^2 f(x)D\bigr)$. Several approaches are available for this task. A straightforward option is to use modern automatic differentiation frameworks to form the Hessian $\nabla^2 f(x)$ explicitly, form the scaled matrix $D\nabla^2 f(x)D$, and then compute its largest eigenvalue using standard eigendecomposition routines. This approach is practical for smaller networks and moderate input dimensions.

When the Hessian is too expensive to compute explicitly, one may instead rely on Hessian--vector products, which are supported efficiently in modern automatic differentiation frameworks, together with iterative methods such as power iteration or Lanczos \cite{van1996matrix}, to estimate $\lambda_{\max}\!\bigl(D\nabla^2 f(x)D\bigr)$ directly. In our experiments, the first approach yielded better runtime and was therefore used throughout.

\section{Experiments}

In this section, we present numerical experiments on reachability analysis under $\ell_\infty$-bounded state perturbations for LTI systems controlled by neural networks. Specifically, we consider the optimization problem
\begin{align}
    \label{eq:reachabilityAnalsisProblem}
    \max_x \quad c^\top \bigl(Ax + B z^L(x) + d\bigr) \quad 
    \text{s.t.} \quad \|D(x-x_c)\|_\infty \le 1,
\end{align}
where $A \in \mathbb{R}^{n_0 \times n_0}$, $B \in \mathbb{R}^{n_0 \times n_u}$, and $d \in \mathbb{R}^{n_0}$ define the system dynamics, and $n_u$ denotes the number of control inputs. The neural network controller $z^L$ is trained to imitate a model predictive controller (MPC).

Our goal is to bound the set of reachable states of the closed-loop system over multiple time steps and verify that the system can both avoid unsafe regions and reach its target state. By solving \eqref{eq:reachabilityAnalsisProblem} over a suitable set of directions $c$, we obtain a polyhedral outer approximation of the reachable set at the next time step.  This approximation can then be propagated recursively for multi-step reachability analysis. Different choices of directions yield different geometric templates; in this work, we use rotated rectangles, following \cite{entesari2023reachlipbnb}.

We consider the standard 6D quadrotor benchmark. For a discretization step $\Delta t$, the system dynamics are given by
\[
A = I_{6\times6} + \!\Delta t \times \begin{bmatrix}
0_{3\times3} & I_{3\times3} \\
0_{3\times3} & 0_{3\times3}
\end{bmatrix}, 
\quad 
B = \!\Delta t \!\times  \!\begin{bmatrix}
& g & 0 & 0 \\
0_{3\times3} & 0 & -g & 0 \\
& 0 & 0 & 1
\end{bmatrix}^\top,
\quad
d = \Delta t \times \begin{bmatrix}
0_{5\times1} \\
-g
\end{bmatrix}
\]
\begin{wrapfigure}[17]{r}{0.5\textwidth}
\includegraphics[width=.45\textwidth]{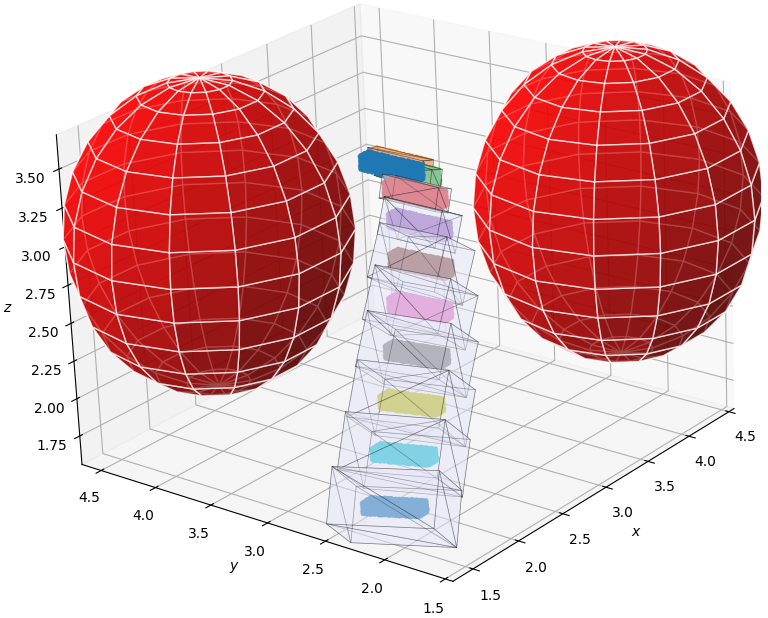}
        \caption{The quadrotor problem setup. The point clouds show trajectory samples from the system via numerical simulation. The obstacles are shown as two spheres. The reachable sets are calculated with a tolerance of $0.001$.}
        \label{fig:quadrotorReacability}
\end{wrapfigure}
The control input is $u =
\begin{bmatrix}
\tan(\theta) & \tan(\phi) & \tau
\end{bmatrix}^\top$, 
which in our setting is produced by the neural network controller $z^L$. Following \cite{sharifi2024provable}, training data is collected from an MPC controller that steers the system toward the origin while avoiding two spherical obstacles; see Figure~\ref{fig:quadrotorReacability}. A neural network with architecture $6 \times 32 \times 32 \times 3$ and $\tanh$ activations is then trained to imitate the MPC controller.\\
We consider a hyperrectangular initial set and compute its reachable sets over 10 time steps, corresponding to a horizon of 1.0 seconds.


\begin{wrapfigure}[10]{r}{0.5\textwidth}
  \centering
    \scalebox{.73}{
    \begin{tabular}{c c c c}
    \toprule
         BaB Tol & Bounding Method & Branches & Run Time (s) \\
         \midrule
          \multirow{2}{*}{$10^{-2}$}  & 1st order (local) & 908.0k $\pm$ 36.5k & 320.4 $\pm$ 13.4\\
          & \textsc{HiTaB} (Ours) & 119.7k $\pm$ 9.3k & 108.7 $\pm$ 11.5\\
          \midrule
         \multirow{2}{*}{$10^{-3}$} & 1st order (local) & 1613k $\pm$ 113.8k & 555.0 $\pm$ 44.6 \\
         & \textsc{HiTaB} (Ours) & 434.8k $\pm$ 26.4k & 531.4 $\pm$ 37.1 \\
         \bottomrule
    \end{tabular}
    }
    \captionof{table}{Run times and number of branches for the BaB algorithm using various bounding algorithms and tolerances. We report statistics of mean $\pm$ standard deviation over 10 runs.}
    \label{tab:comparison}
\end{wrapfigure}

We compare our method against the closest baseline, namely the first-order local bounding approach of \cite{sharifi2024provable}. For both methods, the Lipschitz constants $L_{a^I}$ are computed using LipLT \cite{fazlyab2023certified}. The results are reported in Figure~\ref{fig:quadrotorReacability} and Table~\ref{tab:comparison}.

As shown in Table~\ref{tab:comparison}, the bound using second-order information substantially reduces the number of branches generated by the BaB procedure at both tolerance levels. This directly reflects the tighter local approximation provided by the second-order bound, especially on smaller subdomains where higher-order information becomes most effective.
\vspace{-5mm}
\section{Conclusion}

In this work, we introduced \textsc{HiTaB}, a practical framework for verification of smooth neural networks that systematically exploits Hessian Lipschitz continuity. Our approach addresses a key limitation of existing verification methods by incorporating second-order smoothness information, leading to tighter certificates for smooth architectures. We developed a hierarchy of bounds based on zeroth-, first-, and second-order information, together with explicit conditions under which higher-order constructions provably improve upon lower-order ones. Our main technical contribution is a compositional method (Theorem~\ref{thm:lipschitzOfCurvatureCompositional}) for efficiently upper-bounding $L_{\nabla^2 f}$ in deep neural networks through layerwise propagation of curvature information. We further extended the second-order framework to $\ell_\infty$-bounded perturbations (Proposition~\ref{prop:secondOrderUpperBoundInfinity}) and integrated it into a branch-and-bound pipeline through Algorithm~1.\\
\textbf{Limitations and Future Work.}
Our experiments focused on $\ell_\infty$-bounded problems.
In the $\ell_2$ setting, tighter estimates of $\bar f_2^*(x_c,\varepsilon)$ may be achievable, albeit at greater computational cost, for example by solving the associated optimality conditions numerically. Exploring such refinements, together with a broader empirical study of second-order bounds under different input geometries, is an important direction for future work. In addition, our current implementation is CPU-based and does not exploit GPU parallelism. Adapting the major components of the algorithm to GPU architectures could substantially improve runtime and scalability.

\clearpage

\bibliography{bib}

@inproceedings{singla2020second,
  title={Second-order provable defenses against adversarial attacks},
  author={Singla, Sahil and Feizi, Soheil},
  booktitle={International conference on machine learning},
  pages={8981--8991},
  year={2020},
  organization={PMLR}
}

@article{fazlyab2023certified,
  title={Certified robustness via dynamic margin maximization and improved lipschitz regularization},
  author={Fazlyab, Mahyar and Entesari, Taha and Roy, Aniket and Chellappa, Rama},
  journal={Advances in Neural Information Processing Systems},
  volume={36},
  pages={34451--34464},
  year={2023}
}

@inproceedings{entesari2023reachlipbnb,
  title={ReachLipBnB: A branch-and-bound method for reachability analysis of neural autonomous systems using Lipschitz bounds},
  author={Entesari, Taha and Sharifi, Sina and Fazlyab, Mahyar},
  booktitle={2023 IEEE International Conference on Robotics and Automation (ICRA)},
  pages={1003--1010},
  year={2023},
  organization={IEEE}
}

@article{sharifi2024provable,
  title={Provable Bounds on the Hessian of Neural Networks: Derivative-Preserving Reachability Analysis},
  author={Sharifi, Sina and Fazlyab, Mahyar},
  journal={arXiv preprint arXiv:2406.04476},
  year={2024}
}

@article{entesari2024compositional,
  title={Compositional curvature bounds for deep neural networks},
  author={Entesari, Taha and Sharifi, Sina and Fazlyab, Mahyar},
  journal={arXiv preprint arXiv:2406.05119},
  year={2024}
}

@article{van1996matrix,
  title={Matrix computations (Johns Hopkins studies in mathematical sciences)},
  author={Van Loan, Charles F and Golub, G},
  journal={Matrix Computations},
  volume={5},
  pages={32},
  year={1996}
}

@article{ibrahum2024deep,
  title={Deep learning adversarial attacks and defenses in autonomous vehicles: A systematic literature review from a safety perspective},
  author={Ibrahum, Ahmed Dawod Mohammed and Hussain, Manzoor and Hong, Jang-Eui},
  journal={Artificial Intelligence Review},
  volume={58},
  number={1},
  pages={28},
  year={2024},
  publisher={Springer}
}

@article{javed2024robustness,
  title={Robustness in deep learning models for medical diagnostics: security and adversarial challenges towards robust AI applications},
  author={Javed, Haseeb and El-Sappagh, Shaker and Abuhmed, Tamer},
  journal={Artificial Intelligence Review},
  volume={58},
  number={1},
  pages={12},
  year={2024},
  publisher={Springer}
}

@article{tambon2022certify,
  title={How to certify machine learning based safety-critical systems? A systematic literature review},
  author={Tambon, Florian and Laberge, Gabriel and An, Le and Nikanjam, Amin and Mindom, Paulina Stevia Nouwou and Pequignot, Yann and Khomh, Foutse and Antoniol, Giulio and Merlo, Ettore and Laviolette, Francois},
  journal={Automated Software Engineering},
  volume={29},
  number={2},
  pages={38},
  year={2022},
  publisher={Springer}
}

@article{wang2021beta,
  title={Beta-crown: Efficient bound propagation with per-neuron split constraints for neural network robustness verification},
  author={Wang, Shiqi and Zhang, Huan and Xu, Kaidi and Lin, Xue and Jana, Suman and Hsieh, Cho-Jui and Kolter, J Zico},
  journal={Advances in neural information processing systems},
  volume={34},
  pages={29909--29921},
  year={2021}
}

@article{zhang2018efficient,
  title={Efficient neural network robustness certification with general activation functions},
  author={Zhang, Huan and Weng, Tsui-Wei and Chen, Pin-Yu and Hsieh, Cho-Jui and Daniel, Luca},
  journal={Advances in neural information processing systems},
  volume={31},
  year={2018}
}

@article{xu2020fast,
  title={Fast and complete: Enabling complete neural network verification with rapid and massively parallel incomplete verifiers},
  author={Xu, Kaidi and Zhang, Huan and Wang, Shiqi and Wang, Yihan and Jana, Suman and Lin, Xue and Hsieh, Cho-Jui},
  journal={arXiv preprint arXiv:2011.13824},
  year={2020}
}

@inproceedings{shi2025neural,
  title={Neural network verification with branch-and-bound for general nonlinearities},
  author={Shi, Zhouxing and Jin, Qirui and Kolter, Zico and Jana, Suman and Hsieh, Cho-Jui and Zhang, Huan},
  booktitle={International Conference on Tools and Algorithms for the Construction and Analysis of Systems},
  pages={315--335},
  year={2025},
  organization={Springer}
}

@inproceedings{zhang2019recurjac,
  title={Recurjac: An efficient recursive algorithm for bounding jacobian matrix of neural networks and its applications},
  author={Zhang, Huan and Zhang, Pengchuan and Hsieh, Cho-Jui},
  booktitle={Proceedings of the AAAI Conference on Artificial Intelligence},
  volume={33},
  number={01},
  pages={5757--5764},
  year={2019}
}

@article{szegedy2013intriguing,
  title={Intriguing properties of neural networks},
  author={Szegedy, Christian and Zaremba, Wojciech and Sutskever, Ilya and Bruna, Joan and Erhan, Dumitru and Goodfellow, Ian and Fergus, Rob},
  journal={arXiv preprint arXiv:1312.6199},
  year={2013}
}

@article{huang2021training,
  title={Training certifiably robust neural networks with efficient local lipschitz bounds},
  author={Huang, Yujia and Zhang, Huan and Shi, Yuanyuan and Kolter, J Zico and Anandkumar, Anima},
  journal={Advances in Neural Information Processing Systems},
  volume={34},
  pages={22745--22757},
  year={2021}
}

@article{shi2022efficiently,
  title={Efficiently computing local lipschitz constants of neural networks via bound propagation},
  author={Shi, Zhouxing and Wang, Yihan and Zhang, Huan and Kolter, J Zico and Hsieh, Cho-Jui},
  journal={Advances in Neural Information Processing Systems},
  volume={35},
  pages={2350--2364},
  year={2022}
}

@article{jordan2020exactly,
  title={Exactly computing the local lipschitz constant of relu networks},
  author={Jordan, Matt and Dimakis, Alexandros G},
  journal={Advances in Neural Information Processing Systems},
  volume={33},
  pages={7344--7353},
  year={2020}
}

@article{fazlyab2019efficient,
  title={Efficient and accurate estimation of lipschitz constants for deep neural networks},
  author={Fazlyab, Mahyar and Robey, Alexander and Hassani, Hamed and Morari, Manfred and Pappas, George},
  journal={Advances in neural information processing systems},
  volume={32},
  year={2019}
}

@article{araujo2023unified,
  title={A unified algebraic perspective on lipschitz neural networks},
  author={Araujo, Alexandre and Havens, Aaron and Delattre, Blaise and Allauzen, Alexandre and Hu, Bin},
  journal={arXiv preprint arXiv:2303.03169},
  year={2023}
}

@inproceedings{wang2023direct,
  title={Direct parameterization of lipschitz-bounded deep networks},
  author={Wang, Ruigang and Manchester, Ian},
  booktitle={International Conference on Machine Learning},
  pages={36093--36110},
  year={2023},
  organization={PMLR}
}

@article{pauli2024novel,
  title={Novel quadratic constraints for extending lipsdp beyond slope-restricted activations},
  author={Pauli, Patricia and Havens, Aaron and Araujo, Alexandre and Garg, Siddharth and Khorrami, Farshad and Allg{\"o}wer, Frank and Hu, Bin},
  journal={arXiv preprint arXiv:2401.14033},
  year={2024}
}

@inproceedings{pauli2023lipschitz,
  title={Lipschitz constant estimation for 1d convolutional neural networks},
  author={Pauli, Patricia and Gramlich, Dennis and Allg{\"o}wer, Frank},
  booktitle={Learning for Dynamics and Control Conference},
  pages={1321--1332},
  year={2023},
  organization={PMLR}
}

@inproceedings{hu2020reach,
  title={Reach-sdp: Reachability analysis of closed-loop systems with neural network controllers via semidefinite programming},
  author={Hu, Haimin and Fazlyab, Mahyar and Morari, Manfred and Pappas, George J},
  booktitle={2020 59th IEEE conference on decision and control (CDC)},
  pages={5929--5934},
  year={2020},
  organization={IEEE}
}

@article{huang2019reachnn,
  title={Reachnn: Reachability analysis of neural-network controlled systems},
  author={Huang, Chao and Fan, Jiameng and Li, Wenchao and Chen, Xin and Zhu, Qi},
  journal={ACM Transactions on Embedded Computing Systems (TECS)},
  volume={18},
  number={5s},
  pages={1--22},
  year={2019},
  publisher={ACM New York, NY, USA}
}

@inproceedings{ivanov2019verisig,
  title={Verisig: verifying safety properties of hybrid systems with neural network controllers},
  author={Ivanov, Radoslav and Weimer, James and Alur, Rajeev and Pappas, George J and Lee, Insup},
  booktitle={Proceedings of the 22nd ACM International Conference on Hybrid Systems: Computation and Control},
  pages={169--178},
  year={2019}
}

@inproceedings{ivanov2021verisig,
  title={Verisig 2.0: Verification of neural network controllers using taylor model preconditioning},
  author={Ivanov, Radoslav and Carpenter, Taylor and Weimer, James and Alur, Rajeev and Pappas, George and Lee, Insup},
  booktitle={International Conference on Computer Aided Verification},
  pages={249--262},
  year={2021},
  organization={Springer}
}

@inproceedings{kochdumper2023open,
  title={Open-and closed-loop neural network verification using polynomial zonotopes},
  author={Kochdumper, Niklas and Schilling, Christian and Althoff, Matthias and Bak, Stanley},
  booktitle={NASA Formal Methods Symposium},
  pages={16--36},
  year={2023},
  organization={Springer}
}

\clearpage

\appendix

\section{Proofs}
\subsection{Proof of equation \ref{eq:pointwise2ndOrderBound}}

\label{proofOfeq:pointwise2ndOrderBound}

\begin{proof}
    To prove the desired bound, we utilize the mean-value theorem. Define the scalar function $\phi(t) = f(x + t\delta)$. We have that
    \[
    \phi'(t) = \nabla f(x+t\delta)^\top \delta, \quad \phi''(t) = \delta^\top \nabla^2 f(x+t\delta) \delta.
    \]
    The mean-value theorem guarantees
    \begin{align*}
        \phi(1) &= \phi(0) + \int_0^1 \phi'(t) dt\\
        &= \phi(0) + \phi'(0) + \int_0^1 \left( \phi'(t) - \phi'(0) \right) dt\\
        &= \phi(0) + \phi'(0) + \int_0^1 \left( \int_0^t \phi''(s)ds \right) dt\\
        &= \phi(0) 
        + \phi'(0) 
        + \int_0^1 \left( \int_0^t \left(  \phi''(s) \pm \phi''(0)\right)ds \right) dt\\
        &= \phi(0) 
        + \phi'(0) 
        + \frac{1}{2}\phi''(0)
        + \int_0^1 \left( \int_0^t \left(  \phi''(s) - \phi''(0)\right)ds \right) dt.
    \end{align*}
    Next, we have
    \begin{align*}
        \left|\phi(1) - \phi(0) 
        - \phi'(0) 
        - \frac{1}{2}\phi''(0)\right| 
        &= 
        \left|
        \int_0^1 \left( \int_0^t \left(  \phi''(s) - \phi''(0)\right)ds \right) dt
        \right|\\
        &\leq 
        \int_0^1  \int_0^t \left|  \phi''(s) - \phi''(0)\right|ds dt\\
        &\leq \int_0^1  \int_0^t L_{\phi''} s ds dt\\
        &= \frac{1}{6}L_{\phi''}.
    \end{align*}
    Finally, note that $L_{\phi''} \leq L_{\nabla^2 f} \norm{\delta}_2^3$.
    Putting all these together, we arrive at the desired bound.
\end{proof}

\subsection{Proof of Proposition \ref{prop:splitAndBound}}\label{proofOfProp:splitAndBound}
\begin{proof}
    Expanding the $\sup$ operator over the individual terms, we have
    \begin{align*}
        \bar{f}_2^*(x_c, \varepsilon) &\leq 
        \sup_{\norm{\delta}_2\leq \varepsilon} f(x_c) + 
        \sup_{\norm{\delta}_2\leq \varepsilon} \nabla f(x_c)^\top \delta + 
        \sup_{\norm{\delta}_2\leq \varepsilon} \frac{1}{2}\delta^\top f(x_c)\delta + 
        \sup_{\norm{\delta}_2\leq \varepsilon} \frac{1}{6}L_{\nabla^2 f} \norm{\delta}_2^3\\
        &= f(x_c) + \norm{\nabla f(x_c)}_2 \varepsilon + \frac{1}{2} \relu{\lambda_{\max} \nabla^2 f(x_c)} \varepsilon^2 + \frac{1}{6}L_{\nabla^2 f} \varepsilon^3,
    \end{align*}
    where the $\relu{\lambda_{\max} \nabla^2 f(x_c)}$ shows up because if $\lambda_{\max} \nabla^2 f(x_c) < 0$, then the corresponding $\sup$ would choose $\delta = 0$ to maximize its objective.
\end{proof}

\subsection{Proof of Proposition \ref{prop:thirdIsBetter}} \label{proofOfProp:thirdIsBetter}
\begin{proof}
    We simply solve for when the bound provided by the second-order method is smaller than the one provided by the first-order method. We have:
    \[
    f(x_c) + \norm{\nabla f(x_c)}_2 \varepsilon + \frac{1}{2} \relu{\lambda_{\max} \nabla^2 f(x_c)} \varepsilon^2 + \frac{1}{6}L_{\nabla^2 f} \varepsilon^3 
    \leq
    f(x_c) + \|\nabla f(x_c)\|_2 \varepsilon + \frac{1}{2} L_{\nabla f} \varepsilon^2,
    \] 
    \[
    \frac{1}{2} \relu{\lambda_{\max} \nabla^2 f(x_c)} + \frac{1}{6}L_{\nabla^2 f} \varepsilon 
    \leq
    \frac{1}{2} L_{\nabla f},
    \]
    Which yields the desired result.
\end{proof}

\subsection{Proof of Proposition \ref{prop:secondOrderUpperBoundInfinity}}
\label{sec:proofOfSecondOrderUpperBoundInfinity}

\begin{proof}
    By definition, we have that
    \[
    \bar{f}_2^*(x_c, D) = \max_{\|D^{-1} \delta\|_\infty \leq 1} f(x_c) + \nabla f(x_c)^\top \delta + \frac{1}{2} \delta^\top \nabla^2 f(x_c) \delta + \frac{1}{6} L_{\nabla^2 f} \|\delta\|_2^3
    \]
    We upper bound the right-hand side by taking the maximization individually over each term. We have
    \[
    \max_{\|D^{-1} \delta\|_\infty \leq 1} \nabla f(x_c)^\top \delta 
    = 
      \max_{\| \nu \|_\infty \leq 1} \nabla f(x_c)^\top D\nu
    = 
     \|D \nabla f(x_c)\|_1,
    \]
    where we have used the definition of the dual norm.
    Moreover, we have
    \begin{align*}
        \begin{split}
            \max_{\|D^{-1} \delta\|_\infty \leq 1} \delta^\top \nabla^2 f(x_c) \delta 
            &=
             \max_{\|\nu\|_\infty \leq 1} \nu^\top D\nabla^2 f(x_c) D \nu\\
            &\leq 
             \max_{\|\nu\|_2 \leq \sqrt{n}} \nu^\top D\nabla^2 f(x_c) D \nu\\
            &=
            n \max_{\|\nu\|_2 \leq 1} \nu^\top D\nabla^2 f(x_c) D \nu\\
            &=
            n \paran{\lambda_{\max} \paran{D\nabla^2 f(x_c) D}}_+\\
            &\leq
            n d_{\max}^2   \paran{\lambda_{\max} \nabla^2 f(x_c)}_+,
        \end{split}
    \end{align*}
    where $d_{\max}$ is the largest diagonal element of $D$.
    Finally, we have
    \[
    \max_{\|D^{-1} \delta\|_\infty \leq 1} \norm{\delta}_2^3 = \max_{\|\delta\|_\infty \leq 1} \norm{D\delta}_2^3 =  \norm{D}_F^3,
    \]
    where the last equality holds as $D$ is a diagonal matrix. This concludes the proof.

\end{proof}

\subsection{Proof of Theorem \ref{thm:lipschitzOfCurvatureCompositional}} \label{sec:proofOfLipschitzOfCurvatureCompositional}

\begin{proof}
    Writing the definition of Lipschitz continuity, we have
\begin{align*}
    \| D^2 F_j^I &\circ a^{I - 1} (x) - D^2 F_j^I \circ a^{I - 1} (y) \|_2 \\
    &\leq 
    \underbrace{\bigg\|  D a^{I - 1}(x) ^\top \nabla^2 F_j^I \big( a^{I - 1}(x) \big) D a^{I - 1}(x) -  D a^{I - 1}(y) ^\top \nabla^2 F_j^I \big( a^{I - 1}(y) \big) D a^{I - 1}(y) \bigg\|_2}_{\RNum{1}}\\
    &\quad + \underbrace{\sum_{i=1}^{N_{I - 1}} \bigg\|  \frac{\partial F_j^I(x)}{\partial x_i}\bigg|_{a^{I - 1}(x)} \nabla^2 a^{I - 1}_i(x) - \frac{\partial F_j^I(x)}{\partial x_i}\bigg|_{a^{I - 1}(y)} \nabla^2 a^{I - 1}_i(y) \bigg\|_2}_{\RNum{2}}.
\end{align*}
We can further bound each term. We have
\begin{align*}
    \RNum{1} &= \bigg\|  D a^{I - 1}(x) ^\top \nabla^2 F_j^I \big( a^{I - 1}(x) \big) D a^{I - 1}(x) \\
    &\qquad \pm 
    D a^{I - 1}(x) ^\top \nabla^2 F_j^I \big( a^{I - 1}(y) \big) D a^{I - 1}(y)\\
    &\qquad -  D a^{I - 1}(y) ^\top \nabla^2 F_j^I \big( a^{I - 1}(y) \big) D a^{I - 1}(y) \bigg\|_2\\
    &\leq 
    \bigg\| D a^{I - 1}(x) ^\top \bigg\|_2 
    \cdot \bigg\| \nabla^2 F_j^I \big( a^{I - 1}(x) \big) D a^{I - 1}(x) \\
    &\qquad \qquad \qquad \qquad \quad 
    \pm \nabla^2 F_j^I \big( a^{I - 1}(y) \big) D a^{I - 1}(x)\\
    &\qquad \qquad \qquad \qquad \quad  
    - \nabla^2 F_j^I \big( a^{I - 1}(y) \big) D a^{I - 1}(y) \bigg\|_2\\
    &\qquad 
    + \bigg\|\left( D a^{I - 1}(x) - D a^{I - 1}(y)\right)^\top\bigg\|_2 
    \cdot 
    \bigg\|\nabla^2 F_j^I \big( a^{I - 1}(y) \big)\bigg\|_2
    \cdot
    \bigg\|D a^{I - 1}(y)\bigg\|_2.
\end{align*}
We use the fact that over matrices we have 
$\norm{A}_2 = \norm{A^\top}_2$ to further simplify. We have
\begin{align*}
    \RNum{1}&\leq  
     \bigg\| \nabla^2 F_j^I \big( a^{I - 1}(x) \big) - \nabla^2 F_j^I \big( a^{I - 1}(y) \big) \bigg\|_2
    \cdot \bigg\|D a^{I - 1}(x)\bigg\|_2^2\\
    &\qquad + 
    \bigg\| D a^{I - 1}(x) \bigg\|_2
    \cdot \bigg\| \nabla^2 F_j^I \big( a^{I - 1}(y) \big)\bigg\|_2
    \cdot \bigg\| D a^{I - 1}(x) - D a^{I - 1}(y) \bigg\|_2\\
    &\qquad + \bigg\| D a^{I - 1}(x) - D a^{I - 1}(y)\bigg\|_2 
    \cdot \bigg\|\nabla^2 F_j^I \big( a^{I - 1}(y) \big)\bigg\|_2
    \cdot \bigg\|D a^{I - 1}(y)\bigg\|_2\\
    &\leq \bigg( L_{\nabla^2F_j^I} L_{a^{I - 1}} 
     \big\| D a^{I - 1}(x) \big\|_2^2  \\
    &\qquad+ L_{D a^{I - 1}} \big\| D a^{I - 1}(x)  \big\|_2 
    \cdot \big\| \nabla^2 F_j^I \big( a^{I - 1}(y) \big) \big\|_2\\
    &\qquad + L_{D a^{I - 1}} \big\| D a^{I - 1}(y) \big\|_2 
    \cdot \big\| \nabla^2 F_j^I \big( a^{I - 1}(y) \big) \big\|_2
    \bigg) \big\| x - y \big\|_2\\
    &\leq \left( L_{\nabla^2 F_j^I} L_{a^{I - 1}}^3 
    +  2L_{D a^{I - 1}} L_{a^{I - 1}} L_{\nabla F_j^I} 
    \right) \big\| x - y \big\|_2.
\end{align*}

Furthermore
\begin{align*}
    \RNum{2} &= \sum_{i=1}^{N_{I - 1}} \bigg\|  \frac{\partial F_j^I(x)}{\partial x_i}\bigg|_{a^{I - 1}(x)} \nabla^2 a^{I - 1}_i(x) \\
    &\qquad \quad 
    \pm \frac{\partial F_j^I(x)}{\partial x_i}\bigg|_{a^{I - 1}(y)} \nabla^2 a^{I - 1}_i(x)
    - \frac{\partial F_j^I(x)}{\partial x_i}\bigg|_{a^{I - 1}(y)} \nabla^2 a^{I - 1}_i(y) \bigg\|_2\\
    &\leq 
    \sum_{i=1}^{N_{I - 1}} \Bigg( \Bigg|  \frac{\partial F_j^I(x)}{\partial x_i}\bigg|_{a^{I - 1}(x)} - \frac{\partial F_j^I(x)}{\partial x_i}\bigg|_{a^{I - 1}(y)}\Bigg| 
    \cdot \bigg\|\nabla^2 a^{I - 1}_i(x) \bigg\|_2\\
    &\qquad \qquad 
    + \bigg| \frac{\partial F_j^I(x)}{\partial x_i}\bigg|_{a^{I - 1}(y)} \bigg|
    \cdot \bigg\| \nabla^2 a^{I - 1}_i(x) - \nabla^2 a^{I - 1}_i(y)\bigg\|_2 \Bigg)\\
    &\leq \sum_{i=1}^{N_{I - 1}} \left( L_{\frac{\partial F_j^I(x)}{\partial x_i}} L_{a^{I - 1}} L_{\nabla a^{I - 1}_i} + L_{\nabla^2 a^{I - 1}_i} \sup_x |\frac{\partial F_j^I(x)}{\partial x_i}\bigg|_x |  \right) \|x - y\|_2.
\end{align*}
Putting cases $\RNum{1}$ and $\RNum{2}$ together we get
\begin{align*}
    \| D^2 F_j^I & \circ a^{I - 1} (x) - D^2 F_j^I \circ a^{I - 1} (y) \|_2 \leq \\
    & \quad \Bigg( L_{\nabla^2 F_j^I} L_{a^{I - 1}}^3
    + 2L_{D a^{I - 1}} L_{a^{I - 1}} L_{\nabla F_j^I}\\
    &\qquad 
    + \sum_{i=1}^{N_{I - 1}} \left( L_{\frac{\partial F_j^I(x)}{\partial x_i}} L_{a^{I - 1}} L_{\nabla a^{I - 1}_i} + L_{\nabla^2 a^{I - 1}_i} \sup_x |\frac{\partial F_j^I(x)}{\partial x_i}\bigg|_x | \right)
    \Bigg) \big\| x - y \big\|_2
\end{align*}
\end{proof}

\subsection{Derivation of equations \ref{eq:supOfDerivative}, \ref{eq:lipschitzOfPartialDerivative}, \ref{eq:lipschitzOfGradient}, and
\ref{eq:lipschitzOfCurvature} }
\label{sec:appendixIndividualLipschitzTerms}
As stated, we have
\[
DF^I(x) = \text{Diag}(\sigma'(W^Ix + b^I))W^I.
\]
Subsequently, we can derive each of the desired terms. First, we have that 
\[
\frac{\partial F^I_j(x)}{\partial x_i} = W^I_{ji} \sigma'(W^I_{j,:} x + b^I_j).
\]
Consequently, as $\sup_x \sigma'(x) \leq L_\sigma$, we have that 
\[
\sup \frac{\partial F^I_j(x)}{\partial x_i} = L_\sigma |W^I_{ji}|.
\]
Moreover, using the Lipschitz property of $\sigma'$ we have that 
\[
L_{\frac{\partial F^I_j(x)}{\partial x_i}} = L_{\sigma'} 
\norm{W^I_{j,:}}_2 
|W^I_{ji}|.
\]
To derive $L_{\nabla F^I_j}$, we simply write the definition of Lipschitz continuity for $\nabla F^I_j$. We have
\begin{align*}
    \norm{\nabla F^I_j(x) - \nabla F^I_j(y)}_2 
    &=
    \norm{\paran{\sigma'(W^I_{j,:}x + b^I_j) - \sigma'(W^I_{j,:}y + b^I_j)}
    {W^I_{j,:}}^\top
    }_2\\
    &\leq 
    \norm{\sigma'(W^I_{j,:}x + b^I_j) - \sigma'(W^I_{j,:}y + b^I_j)
    }_2
    \cdot
    \norm{W^I_{j,:}}_2\\
    &\leq
    L_{\sigma'} \norm{W^I_{j,:}}_2^2.
\end{align*}
This result is also given if we were to use Theorem 3.9 of \cite{entesari2024compositional}.
Going one step further, we have
\[
\nabla^2 F^I_j = \sigma''(W^I_{j,:}x + b^I_j) {W^I_{j,:}}^\top W^I_{j,:}.
\]
As $\norm{{W^I_{j,:}}^\top W^I_{j,:}}_2 = \norm{W^I_{j,:}}_2^2$, this yields the desired result for $L_{\nabla^2 F^I_j}$

\subsection{Derivation of Lipschitz terms for $\hat{F}$}\label{sec:appendixIndividualLipschitzTermsForSecondFunction}
$\hat{F} = c^\top W^L \sigma (W^{L - 1} x + b^{L - 1}) + c^\top b^L $
The basis for the proofs of this section are \cite{entesari2024compositional} and we only establish the connection.
We have
\[
\nabla \hat{F}(x) = {W^{L - 1}}^\top \text{Diag}(\sigma'(W^{L - 1}x+b^{L - 1})){W^L}^\top c
\]
We can rewrite this as
\[
\nabla \hat{F}(x) = {W^{L - 1}}^\top \text{Diag}({W^L}^\top c)\sigma'(W^{L - 1}x+b^{L - 1}).
\]
This simply follows from the exchange of terms between a diagonal matrix and multiplication by a vector. As instructed in the text, we define $A = {W^{L - 1}}^\top \text{Diag}({W^L}^\top c)$. Now we have $\nabla \hat{F}(x) = A \sigma'(W^{L - 1}x + b^{L - 1})$.
From this, we first obtain that 
\[
\sup \frac{\partial \hat{F}(x)}{\partial x_i} = L_\sigma \sum_j |A_{ij}|,
\]
and then
\[
L_{\frac{\partial \hat{F}(x)}{\partial x_i}} = L_{\sigma'} 
\sum_j |A_{ij}| \norm{W^{L - 1}_{j,:}}_2.
\]
The naive Lipschitz bound establishes that 
\[
 L_{\nabla \hat{F}} = L_{\sigma'} \norm{A}_2 \norm{W^{L - 1}}_2.
\]
Finally, the rewritten $\nabla \hat{F}(x) = A \sigma'(W^{L - 1}x + b^{L - 1})$ is in the form of a standard neural network layer as described in \cite{entesari2024compositional}. As such we can utilize Theorem 3.8 therein to acquire $L_{\nabla^2 \hat{F}}$, by simply noting that $L_{D \nabla \hat{F}} = L_{\nabla^2 \hat{F}}$.

\end{document}